\newcommand{\R}{\mathbb{R}}
\newcommand{\tr}[1]{\mathrm{Tr}\left\{ #1\right\}}
\newcommand{\step}{\alpha}
\newcommand{\stept}{\alpha_t}
\newcommand{\T}{^{\top}}
\newcommand{\rt}{r_t}
\newcommand{\ph}{\phi}
\newcommand{\pht}{\phi_t}
\newcommand{\phtnext}{\phi_{t+1}}
\newcommand{\wfp}{w^*}
\newcommand{\Exp}[1]{\mathbb{E}\left[ #1 \right]}
\newcommand{\CExp}[2]{\mathbb{E}\left[ \left. {#1} \right| {#2} \right]}
\newtheorem{definition}{Definition}
\newtheorem{theorem}{Theorem}
\newtheorem{lemma}[theorem]{Lemma}
\newcommand{\TD}{TD($\lambda$)}
\newcommand{\en}{\| e_t \|}
\newcommand{\dn}{\| d_t \|}
\newcommand{\ed}{e_t \T d_t}
\newcommand{\Qen}{\| Q_t e_t \|}
\newcommand{\Qed}{ e_t\T Q_t d_t}
\title{Implicit Temporal Differences}
\author{ Anonymous Author 1 \And Anonymous Author 2 \And Anonymous Author 3 }
\author{
Aviv Tamar\\
The Technion - Israel Institute of Technology\\
Haifa 32000, Israel \\
\texttt{avivt@tx.technion.ac.il}\\
\And
Panos Toulis\\
Department of Statistics, Harvard University\\
Cambridge, MA 02138, USA\\
\texttt{ptoulis@fas.harvard.edu}\\
\And
Shie Mannor\\
The Technion - Israel Institute of Technology\\
Haifa 32000, Israel \\
\texttt{shie@ee.technion.ac.il} \\
\And
Edoardo M. Airoldi\\
Department of Statistics, Harvard University\\
Cambridge, MA 02138, USA\\
\texttt{airoldi@fas.harvard.edu}\\
}
\begin{document}

\maketitle

\begin{abstract}
In reinforcement learning, the \TD\ algorithm is a fundamental policy evaluation method with an efficient online implementation that is suitable for large-scale problems.
One practical drawback of \TD\ is its sensitivity to the choice of the step-size. It is an empirically well-known fact that a large step-size leads to fast convergence, at the cost of higher variance and risk of instability.
In this work, we introduce the \emph{implicit \TD} algorithm which has the same function and computational cost as \TD, but is significantly more stable. We provide a theoretical explanation of this stability and an empirical evaluation of implicit \TD\ on
typical benchmark tasks.
Our results show that implicit \TD\ outperforms standard \TD\ and a state-of-the-art method that automatically tunes the step-size,
and thus shows promise for wide applicability.
\end{abstract}

\section{Introduction}

In reinforcement learning (RL), the TD($\lambda$) algorithm \cite{sutton_reinforcement_1998} is a fundamental method for value function estimation. The efficient online implementation of \TD, which has a complexity linear in the number of features, makes it particularly effective for problems with a large number of features when paired with linear function approximation. Indeed, \TD\ has been successfully applied to large-scale domains \cite{powell2007approximate}.

One practical drawback of \TD\ is its sensitivity to the choice of the step-size. It is an empirically well-known fact that a large step-size leads to faster convergence, at the cost of a higher variance and a higher risk of instability.
There have been several studies that tried to solve the step-size problem \cite{sutton1992adapting,george2006adaptive,hutter2007temporal,mahmood2012tuning} through
adaptive step-sizes or variance reduction techniques.
However, all of these methods do not deal with the stability issue
and are prone to divergence when the initial step-size is misspecified \cite{dabney2012adaptive}.
Naturally, the step-size problem is also a well-known difficulty in stochastic gradient descent (SGD) algorithms.
Recently, \emph{implicit} SGD was introduced and it was shown to be asymptotically identical to standard SGD as a statistical
estimation method, but significantly more stable in small-to-moderate samples \cite{Toulis2014sgd}.
The stability properties of implicit SGD can be motivated either through its connection to proximal methods in optimization \cite{beck2009fast, wang2013stabilization},
or through its interpretation as a shrinkage method in statistics \cite{toulis2014stochastic}.

Inspired by \cite{Toulis2014sgd}, in this work we introduce the \emph{implicit \TD} algorithm,
and show that it also enjoys better stability properties than standard \TD.
We provide a theoretical explanation of this stability by introducing a novel bound on the maximal weight change in each \TD\ iteration, and showing that for implicit \TD\ this change is suitably contained. In addition, we present an empirical evaluation of implicit \TD\ when applied within a SARSA \cite{sutton_reinforcement_1998} policy improvement algorithm, and show that it outperforms standard \TD\, and the Alpha-Bound method \cite{dabney2012adaptive} on several benchmark tasks.

\paragraph{Related Work.}
Proximal methods have been used for stabilization of stochastic iterative
solutions of a linear equation $A x = b$ where $A$ is singular or nearly-singular \cite{wang2013stabilization}. Regularization methods have also been applied for more stable, off-policy learning
\cite{meyer2012l1, liu2012regularized}. Another notable algorithm that
was introduced recently is
the gradient temporal-difference algorithm (GTD) \cite{sutton2009convergent} which is more stable
than \TD\ by keeping less-varied estimates of the TD errors through averaging.
In regard to adaptive step-sizes, the state-of-the-art Alpha-Bound method \cite{dabney2012adaptive} is
based on the heuristic that the TD error should not change sign between subsequent updates.
Our work is distinct because it combines three important properties of a learning algorithm.
First, it is simple since it is based on a simple variation of standard \TD,
and thus inherits its known convergence properties. Second, it is trivial to implement  -- see Equation \eqref{eq:imp_td_lambda_efficient} -- and has complexity that is linear in the number of features.
Third, it is stable because, as an implicit method,
it incorporates second-order information by its definition \cite{toulis2014stochastic}.

\section{Explicit and Implicit TD($\lambda$)}

In this section we review the standard \TD\ algorithm, and present the implicit \TD\ method.

\subsection{Background}
Consider a Markov reward process (MRP) in discrete time with a finite state space $X$, an initial state distribution $\xi_0$, transition probabilities $P(x'|x)$, where $x,x'\in X$, and a deterministic and bounded reward function $r:X \to \R$. We assume that the Markov chain underlying the state transitions is ergodic and uni-chain, so that it admits a stationary distribution $\xi_\infty$. We denote by $x_t$ and $r_t$ the state and reward, respectively at time $t$, where $t = 0,1,2,\cdots$.

Our goal is to learn the weights $w\in \R^k$ of an approximate value function $V(x;w)=w\T \ph(x) \approx V(x) \doteq \CExp{\sum_{t=0}^{\infty}\gamma^t r(x_t)}{x_0=x},$ where $\ph(x)\in\R^k$ is a state-dependent feature vector, and $\gamma\in (0,1)$ is a discount factor. For brevity, in the sequel we denote $\pht = \phi(x_t)$ and $\rt = r(x_t)$.

\newcommand{\wim}[1]{w_{#1}^{\mathrm{im}}}
\newcommand{\eigenim}[1]{\lambda^{\mathrm{im}, #1}}

\subsection{TD Algorithms}
\TD\ algorithms calculate the approximation weights $w$ iteratively, using sampled state transitions and rewards from the MRP.
The standard TD($\lambda$) algorithm \cite{sutton_reinforcement_1998} updates $w$ according to
\begin{equation*}\label{eq:td_lambda}
    w_{t+1} = w_t + \stept \left [ \rt + \gamma \phtnext\T w_t - \pht\T w_t \right ] e_t,
\end{equation*}
where the eligibility trace $e_t\in \R^k$ is updated according to $e_t = \gamma \lambda e_{t-1} + \pht$.
Note that since $\pht = e_t - \gamma \lambda e_{t-1},$ standard TD($\lambda$) may also be written as
\begin{equation*}\label{eq:td_lambda_ver2}
    w_{t+1} = w_t + \stept \left [ \rt + \gamma \phtnext\T w_t + \gamma \lambda e_{t-1}\T w_t - e_t\T w_t \right ] e_t.
\end{equation*}
We now introduce the implicit TD($\lambda$) algorithm. To discriminate from standard \TD, we denote the weights for implicit \TD\ by $\wim{t}$. The implicit \TD\ algorithm updates the weights as follows:
\begin{equation}\label{eq:imp_td_lambda}
  \wim{t+1} = \wim{t} + \stept \left [ \rt + \gamma \phtnext\T \wim{t} + \gamma \lambda e_{t-1}\T \wim{t} - e_t\T
\textcolor{blue}{\wim{t+1}} \right ] e_t.
\end{equation}
Note that Eq. \eqref{eq:imp_td_lambda} is implicit because $\wim{t+1}$ appears in both sides of the equation.
Using the Sherman-Morrison formula, Eq. \eqref{eq:imp_td_lambda} may be solved for $\wim{t+1}$ as follows:
\begin{equation}
\label{eq:imp_td_lambda_efficient}
   \wim{t+1} = \left(I - \frac{\stept}{1+\stept ||e_t||^2} e_t e_t\T\right) \left(\wim{t} + \stept \left [ \rt + \gamma \phtnext\T \wim{t}+ \gamma \lambda e_{t-1}\T \wim{t} \right ] e_t\right).
\end{equation}
Note that since Eq. \eqref{eq:imp_td_lambda_efficient} can be solved by only computing inner products between vectors.
Thus, the complexity for solving Eq. \eqref{eq:imp_td_lambda_efficient} is $\mathcal{O}(k)$ i.e., it has the same complexity as standard TD($\lambda$).

The \emph{fixed point} $\wfp$ of standard \TD\ satisfies
\begin{equation*}\label{eq:fixed_point}
    \wfp = \wfp + \stept \Exp{ \left ( \rt + \gamma \phtnext\T\wfp  - \pht\T\wfp  \right ) e_t },
\end{equation*}
where the expectation is taken over the stationary distribution of the states $\xi_\infty$. It is well known \cite{BT96} that for a suitably decreasing step size, standard TD($\lambda$) converges to $\wfp$. From Eq. \eqref{eq:imp_td_lambda}, it is clear that implicit TD($\lambda$) has the same fixed point, thus, when it converges, it converges to $\wfp$ as well. This shows that implicit TD($\lambda$) has the same \emph{function} as standard TD($\lambda$). However, as we shall now show, implicit TD($\lambda$) is more stable than standard TD($\lambda$).

\section{Stability Analysis}

In this section we analyze the stability of standard and implicit TD($\lambda$), and show that implicit TD($\lambda$) is more stable than standard TD($\lambda$). We consider a \emph{fixed} step size $\alpha_t = \alpha$.
For simplicity, in this section we further assume that the reward $r(x)$ is zero for all states $x\in X$. This allows us to focus on the critical cause of instability with less notational clutter. It is straightforward, however, to extend our results to the $r \neq 0$ case.
We start by introducing the following definitions.
\begin{definition}
\label{def:main}
Let
\begin{equation*}\label{eq:C_d_def}
    \begin{split}
      d_t &\doteq \pht - \gamma \phtnext, \quad X_t \doteq e_t d_t\T , \\
	  Q_t &\doteq \left(I + \alpha e_t e_t\T\right)^{-1}, \quad \beta_t \doteq 1 - \frac{\step \en^2}{1+\step \en^2}.
    \end{split}
\end{equation*}
\end{definition}
Under our assumption that $r=0$, the standard and implicit TD($\lambda$) iterations may be written as
\begin{equation}\label{eq:td_lambdas_no_reward}
\begin{split}
    w_{t+1} &= (I - \step X_t) w_t,\\
    \wim{t+1} &= (I - \step Q_t X_t) \wim{t}.
\end{split}
\end{equation}
Thus, at time $T$, we have $w_T = \prod_{t=0}^{T-1} (I - \step X_t) w_0$, and similarly, assuming $\wim{0} = w_0$, we have $\wim{T} = \prod_{t=0}^{T-1} (I - \step Q_t X_t) w_0$. Letting $\|\cdot\|_2$ denote the matrix spectral norm, it is clear that if $\|I - \step X_t\|_2 \leq 1$ for all $t$, then $\|w_{T}\|_2 \leq \prod_{t=0}^{T-1} \|I - \step X_t\|_2 \|w_0\|_2$, and the standard TD($\lambda$) iterates stay bounded. Unfortunately, there is no guarantee that such a result would occur in practice, and indeed, as we show in the experiments, the TD($\lambda$) iterates frequently diverge unless $\step$ is very small. For implicit TD($\lambda$), however, a much more stable performance was observed. We now provide a theoretical explanation for this observation.

We start with an informal argument. For this argument, assume that $X_t$ is symmetric. Then, $\|I - \step X_t\|_2 > 1$ if either $X_t$ has a negative eigenvalue, or an eigenvalue larger than $2/\alpha$. Since $Q_t$ is positive definite with eigenvalues smaller than 1, it is a contraction, and so $\|I - \step Q_t X_t\|_2 <\|I - \step X_t\|_2$, possibly preventing the divergence. In the following, we make this argument formal, without the unjustified assumption that $X_t$ is symmetric.

Our main result is the following Lemma, where we calculate $\|I - \step X_t\|_2$ and $\|I - \step Q_t X_t\|_2$.
\begin{lemma}\label{lemma:td_eigen_vals}
The matrix $(I - \stept X_t)(I - \stept X_t)\T$ has $k-2$ eigenvalues equal to $1$, and 2 eigenvalues that are given by
\begin{equation*}
    \lambda^+,\lambda^- = 1 + \frac{\step^2 \en^2 \dn^2 -2 \step \ed \pm \step \en \dn \sqrt{\step^2 \en^2 \dn^2 +4 - 4 \step \ed}}{2}.
\end{equation*}
The matrix $(I - \step Q_t X_t)(I - \step Q_t X_t)\T$ has $k-2$ eigenvalues equal to $1$, and 2 eigenvalues that are given by
\begin{equation*}
    \eigenim{+}, \eigenim{-} =1 + \frac{\step^2 \beta_t^2\en^2 \dn^2 -2 \step \beta_t\ed \pm \step \beta_t\en \dn \sqrt{\step^2 \beta_t^2\en^2 \dn^2 +4 - 4 \step \beta_t\ed}}{2}.
\end{equation*}
Furthermore, we have $\|I - \step X_t\|_2 = \max \{ \lambda^+,1 \}$, and $\|I - \step Q_t X_t\|_2 = \max \{\eigenim{+}, 1\}$.
\end{lemma}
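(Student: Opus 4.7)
The plan is to exploit the rank-one structure of $X_t = e_t d_t\T$. Expanding the product and using $X_t X_t\T = \dn^2 e_t e_t\T$, I would write
\begin{equation*}
(I - \step X_t)(I - \step X_t)\T \;=\; I + M_t, \qquad M_t \doteq -\step\, e_t d_t\T - \step\, d_t e_t\T + \step^2 \dn^2\, e_t e_t\T.
\end{equation*}
Because the range of $M_t$ lies inside the two-dimensional subspace $\mathrm{span}\{e_t, d_t\}$, every vector orthogonal to both $e_t$ and $d_t$ is an eigenvector of $I + M_t$ with eigenvalue $1$. This accounts for $k-2$ unit eigenvalues in one stroke, and reduces the problem to finding the two eigenvalues on the remaining two-dimensional invariant subspace.

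To extract those two eigenvalues, I would factor $M_t = E G E\T$ with $E = [e_t,\, d_t] \in \R^{k\times 2}$ and
$G = \bigl(\begin{smallmatrix} \step^2 \dn^2 & -\step \\ -\step & 0 \end{smallmatrix}\bigr)$.
By the standard fact that $PQ$ and $QP$ share the same nonzero eigenvalues, the two sought-after eigenvalues of $M_t = (E)(GE\T)$ agree with those of the $2\times 2$ matrix $G\, E\T E$. Setting $a = \en^2$, $b = \dn^2$, $c = \ed$, a direct computation yields $\mathrm{tr}(GE\T E) = \step^2 ab - 2\step c$ and $\det(GE\T E) = \step^2(c^2 - ab)$. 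Applying the quadratic formula and using the simplification $\mathrm{tr}^2 - 4\det = \step^2 ab\,(\step^2 ab + 4 - 4\step c)$ produces the two eigenvalues of $M_t$; shifting by $1$ recovers the stated $\lambda^+, \lambda^-$.

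For the implicit half, the crucial observation is that $e_t$ is an eigenvector of $Q_t = \inv$ with eigenvalue $\beta_t = 1/(1 + \step \en^2)$, so
\begin{equation*}
Q_t X_t \;=\; (Q_t e_t)\, d_t\T \;=\; \beta_t\, e_t d_t\T \;=\; \beta_t X_t.
\end{equation*}
Hence $I - \step Q_t X_t = I - (\step \beta_t)\, X_t$, and the explicit analysis applies verbatim with $\step$ replaced by $\step \beta_t$, giving $\eigenim{+}, \eigenim{-}$. The spectral-norm identities follow from $\|A\|_2^2 = \lambda_{\max}(AA\T)$ together with the sign convention $\lambda^+ \ge \lambda^-$ (and analogously for the implicit version).

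The only real obstacle is bookkeeping: confirming that the discriminant of the $2\times 2$ characteristic polynomial collapses into the symmetric radical $\step \en \dn \sqrt{\step^2 \en^2 \dn^2 + 4 - 4\step \ed}$ written in the lemma. The genuinely conceptual steps—the rank-one/subspace reduction, the $PQ$/$QP$ trick, and the eigen-relation $Q_t e_t = \beta_t e_t$ which lets the implicit case inherit the explicit answer under a simple rescaling of $\step$—each do a real piece of work and are short.
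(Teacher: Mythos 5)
Your proof is correct, and on both halves it takes a genuinely different route from the paper's. For the two nontrivial eigenvalues, the paper first proves an auxiliary lemma on rank-two matrices $A = ab\T + cd\T$, computing $\mathrm{Tr}(A)$ and $\mathrm{Tr}(A^2)$ to obtain the sum and product of the two nonzero eigenvalues, and then writes $\step^2 X_t X_t\T - \step X_t - \step X_t\T$ in that two-dyad form via the somewhat ad hoc choice $a=e_t$, $b=\step^2\dn^2 e_t - \step d_t$, $c=-\step d_t$, $d=e_t$. Your factorization $M_t = EGE\T$ with $E=[e_t,\,d_t]$, combined with the fact that $E(GE\T)$ and $(GE\T)E$ share nonzero spectra, reduces to the same quadratic more systematically, and your bookkeeping does check out: $\mathrm{tr}^2 - 4\det = \step^2\en^2\dn^2\left(\step^2\en^2\dn^2 + 4 - 4\step\ed\right)$, which collapses to the stated radical. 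On the implicit half your route is cleaner and more informative than the paper's: the paper re-runs the computation with $e_t$ replaced by $Q_t e_t$, using $\Qed=\beta_t\ed$ and $\|Q_te_t\|$ (where it contains the typo $\beta_t^2\en$ in place of $\beta_t\en$, though the final formula is consistent with the correct value), whereas your identity $Q_tX_t=\beta_tX_t$ shows that $I-\step Q_tX_t = I-(\step\beta_t)X_t$ is literally the explicit iteration with the shrunken step $\step\beta_t$, so the second eigenvalue formula follows by pure substitution — and this makes the paper's stability narrative transparent. Two minor remarks: (i) since $(I-\step X_t)(I-\step X_t)\T$ is positive semidefinite its eigenvalues are real, so the radicand is nonnegative and $\lambda^+\ge\lambda^-$ follows immediately from the sign in front of the square root; this replaces, and simplifies, the paper's proof by contradiction that $|\lambda^+|>|\lambda^-|$; (ii) the identity $\|A\|_2^2=\lambda_{\max}(AA\T)$ that you correctly invoke actually yields $\|I-\step X_t\|_2=\sqrt{\max\{\lambda^+,1\}}$, so the missing square root is a defect of the lemma's statement (shared by the paper's own proof), not of your argument.
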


From Lemma \ref{lemma:td_eigen_vals}, it is clear that the difference between $\|I - \step X_t\|_2$ and $\|I - \step Q_t X_t\|_2$ lies in the difference between $\en$ and $\beta_t \en$, and the effect of this difference on $\lambda^+$ and $\eigenim{+}$. But $ \beta_t \en \leq \frac{1}{\step}$, by definition, and so implicit \TD\ is a more stable procedure. In general, when $||e_t||$ is large,
or equivalently when some $||\phi_i||$, $i<t$ is large, then
the eigenvalue $\lambda^+$ for \TD\ is directly affected.
In contrast, the factor $\beta_t$ in implicit \TD\ shrinks to zero,
thus stabilizing the iteration.

\section{Experiments}

We now evaluate our implicit TD($\lambda$) method on several standard benchmarks. We use the SARSA($\lambda$) TD learning algorithm, with fixed step-size implicit TD($\lambda$) as the policy evaluation step; we term this \emph{implicit SARSA($\lambda$)}. We compare to standard SARSA($\lambda$) with a constant step-size, and to SARSA($\lambda$) with the Alpha Bound adaptive step-size of \cite{dabney2012adaptive}. In addition, we note that our implicit SARSA($\lambda$) method may be combined with any other step-size adaptation method. Here, we chose the heuristic Alpha Bound method of \cite{dabney2012adaptive}, resulting in the implicit SARSA($\lambda$) with Alpha Bounds adaptive step-size. All experiments were performed using the RLPy library \cite{RLPy}.

In Figure \ref{fig:results} (left) we show results for the puddle world domain with 3rd order Fourier features and $\lambda=0.5$. We plot the final average reward after 40,000 training steps for different initial step-sizes. It may be seen that both standard SARSA($\lambda$) and SARSA($\lambda$) with the Alpha Bound are not stable for a step size larger than 0.1. Implicit TD($\lambda$), on the other hand, is stable for all step-sizes.
In Figure \ref{fig:results}(right) we show similar results for the cart-pole domain, also with 3rd order Fourier features and $\lambda=0.5$. Similar results were obtained for other standard benchmark domains such as acrobot.

\begin{figure}[th!]
\centering
\includegraphics[scale=0.45]{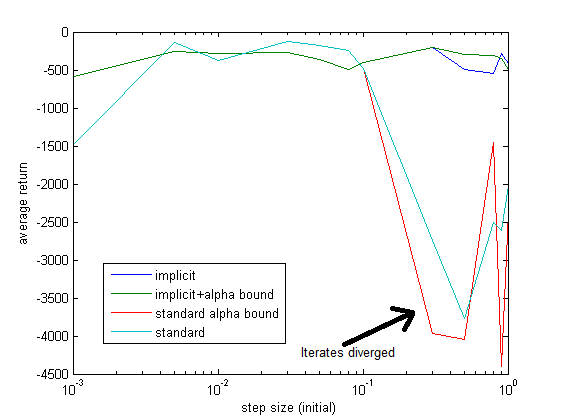}
\includegraphics[scale=0.45]{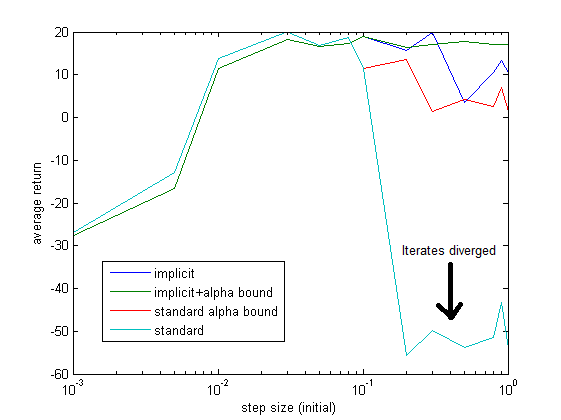}
\caption{\label{fig:results} Left: Puddle world domain. Right: Cart-pole domain.}
\end{figure}

\section{Conclusion and Outlook}
In this work we presented the implicit \TD\ algorithm, a novel \TD\ variant that is more stable, and it is justified both by statistical theory and empirical evidence. Since implicit \TD\ has the same function and the same complexity as standard \TD, it shows great promise to become the de-facto
method for \TD\ learning.

This work is still in a preliminary stage, and there are many interesting questions ahead.
First of all, we would like to have a full statistical analysis of the convergence and errors of implicit \TD, similarly to the existing results for standard \TD\ \cite{BT96}. For a suitably decreasing step-size, it can be shown that implicit \TD\ converges almost surely to the standard \TD\ fixed point.
We also have preliminary results for the bias and variance of implicit \TD, and in the future we intend to use them to derive optimal step-sizes. We also intend to conduct more extensive empirical evaluations with several adaptive step-size rules, and compare with other algorithms
based on proximal methods.

Finally, the implicit update idea has been successful in providing stability for both SGD and \TD\ algorithms. It is interesting whether this idea can be extended to other online RL methods, such as actor-critic \cite{bhatnagar_natural_2009} and policy gradients \cite{sutton_policy_2000}, which are also notoriously prone to instability when a large step-size is used.

\small
\bibliographystyle{abbrv}
\bibliography{my_library}

\newpage
\appendix
\section{Proofs}

\paragraph{Proof of Lemma \ref{lemma:td_eigen_vals}}
We first prove an intermediate result.
\begin{lemma}\label{lemma:rank_2_eigen_vals}
Let $A = a b\T + c d\T \in \R^{k \times k}$. Then A has $k-2$ eigenvalues equal to zero, and 2 eigenvalues:
\begin{equation*}
    \frac{a\T b + c\T d \pm \sqrt{(a\T b -c\T d)^2 + 4(a \T d)(b\T c)}}{2}.
\end{equation*}
\end{lemma}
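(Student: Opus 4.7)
My plan is to exploit the rank-at-most-$2$ structure of $A = ab\T + cd\T$. Since the image of $A$ is contained in $\mathrm{span}\{a,c\}$, the kernel of $A$ has dimension at least $k-2$. Assuming $a$ and $c$ are linearly independent (the degenerate cases can be absorbed into the final formula by continuity), this accounts for $k-2$ eigenvalues equal to zero, and the two remaining eigenvalues must correspond to eigenvectors lying in $\mathrm{span}\{a,c\}$.

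To extract these two nonzero eigenvalues cleanly, I would use the standard factorization trick. Write $A = PQ$ where
\begin{equation*}
P = [a \; c] \in \R^{k \times 2}, \qquad Q = \begin{pmatrix} b\T \\ d\T \end{pmatrix} \in \R^{2 \times k}.
\end{equation*}
The classical identity that $PQ$ and $QP$ share the same nonzero eigenvalues (with multiplicity) then reduces the problem to finding the eigenvalues of the $2 \times 2$ matrix
\begin{equation*}
M \doteq QP = \begin{pmatrix} b\T a & b\T c \\ d\T a & d\T c \end{pmatrix}.
\end{equation*}
An equivalent route, if one prefers to avoid invoking this identity, is to parametrize any candidate nonzero eigenvector as $v = \mu a + \nu c$ and compute $Av$ directly, which produces exactly the same $2\times 2$ coefficient matrix $M$ acting on $(\mu, \nu)\T$.

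The final step is a routine calculation. The trace of $M$ is $a\T b + c\T d$ and its determinant is $(a\T b)(c\T d) - (b\T c)(a\T d)$, so its eigenvalues are
\begin{equation*}
\lambda = \frac{\mathrm{tr}(M) \pm \sqrt{\mathrm{tr}(M)^2 - 4\det(M)}}{2}.
\end{equation*}
Expanding the discriminant gives $(a\T b + c\T d)^2 - 4(a\T b)(c\T d) + 4(b\T c)(a\T d) = (a\T b - c\T d)^2 + 4(a\T d)(b\T c)$, which matches the claimed formula. There is no real obstacle here: the only subtlety worth flagging is the degenerate situation where $a,c$ fail to be linearly independent (or $b,d$ do), in which case the rank drops to one and one of the two ``nonzero'' eigenvalues collapses to zero; it is straightforward to check that the formula still returns the correct values in those cases, so the lemma holds in full generality.
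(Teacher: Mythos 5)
Your proof is correct, but it takes a different route from the paper's. The paper argues via traces: it computes $\tr{A} = a\T b + c\T d$ and $\tr{A^2} = (a\T b)^2 + (c\T d)^2 + 2(a\T d)(b\T c)$, uses the fact that the two nonzero eigenvalues satisfy $\lambda_1 + \lambda_2 = \tr{A}$ and $\lambda_1^2 + \lambda_2^2 = \tr{A^2}$, deduces the product $\lambda_1\lambda_2 = (a\T b)(c\T d) - (a\T d)(b\T c)$, and solves the resulting quadratic. You instead factor $A = PQ$ with $P = [a\;c]$ and $Q = \bigl(\begin{smallmatrix} b\T \\ d\T \end{smallmatrix}\bigr)$ and invoke the identity that $PQ$ and $QP$ share nonzero eigenvalues, reducing everything to the characteristic polynomial of the $2\times 2$ matrix $QP$; the trace and determinant of $QP$ then give the same quadratic. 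Your route has two small advantages: the identity $\det(\lambda I_k - PQ) = \lambda^{k-2}\det(\lambda I_2 - QP)$ delivers the $k-2$ zero eigenvalues with the correct \emph{algebraic} multiplicity in one stroke (the paper's ``rank 2 implies $k-2$ zero eigenvalues'' only directly controls the geometric multiplicity), and you sidestep the slightly delicate step of recovering $\lambda_1\lambda_2$ from the sum and sum of squares. The paper's trace argument, on the other hand, is self-contained and avoids appealing to the $PQ$/$QP$ spectral identity. Your remark on the degenerate case (when $a,c$ or $b,d$ are dependent) is a reasonable addition; the conclusion holds there as well since the $QP$ reduction never assumed independence.
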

\begin{proof}
Since $A$ is of rank 2, it has $k-2$ zero eigenvalues. We now calculate the two remaining eigenvalues $\lambda_1,\lambda_2$.

We have $\tr{A} = \tr{a \T b}+\tr{c \T d}=a\T b + c\T d$. Similarly, we have $\tr{A^2} = (a\T b)^2 + (c\T d)^2 + 2 (a\T d)(b\T c)$.

The only two non-zero eigenvalues of $A^2$ are $\lambda_1^2$ and $\lambda_2^2$, therefore we obtain the following equations
\begin{equation}\label{eq:eig_proof_1}
    \begin{split}
      \lambda_1+\lambda_2 &= a\T b + c\T d, \\
      \lambda_1^2+\lambda_2^2  &= (a\T b)^2 + (c\T d)^2 + 2 (a\T d)(b\T c).
    \end{split}
\end{equation}
With a little algebra, it can be shown from \eqref{eq:eig_proof_1} that
\begin{equation}\label{eq:eig_proof_2}
    \lambda_1 \lambda_2 = (a\T b)(c\T d) - (a\T d)(b\T c).
\end{equation}
From Eq. \eqref{eq:eig_proof_1} and Eq. \eqref{eq:eig_proof_2}, we obtain a quadratic equation for $\lambda_1 \lambda_2$, the solutions of which are
\begin{equation*}
    \frac{a\T b + c\T d \pm \sqrt{(a\T b -c\T d)^2 + 4(a \T d)(b\T c)}}{2}.
\end{equation*}
\end{proof}

We are now ready to prove Lemma \ref{lemma:td_eigen_vals}.
\begin{proof}
We first consider standard TD($\lambda$).

We have that
\begin{equation*}
    (I - \stept X_t)(I - \stept X_t)\T = I + \step^2 X_t X_t\T -\step X_t -\step X_t\T.
\end{equation*}
Consider the matrix
\begin{equation*}
    A \doteq \step^2 X_t X_t\T -\step X_t -\step X_t\T = e_t(\step^2 (d_t\T d_t)e_t\T - \step d_t\T) - \step d_t e_t\T.
\end{equation*}
Using Lemma \ref{lemma:rank_2_eigen_vals}, the non-zero eigenvalues of $A$ are
\begin{equation*}
\begin{split}
    \lambda_1,\lambda_2 &= \frac{\step^2 \en^2 \dn^2 -2 \step \ed \pm \step \en \dn \sqrt{\step^2 \en^2 \dn^2 +4 - 4 \step \ed}}{2}.
\end{split}
\end{equation*}
Thus, the eigenvalues of $I+A$ are $1+\lambda_1,$ $1+\lambda_2$, and $k-2$ eigenvalues of 1.

We now consider implicit TD($\lambda$).
Since $Q_t X_t = Q_t e_t d_t\T$, the previous calculation holds, only with replacing $e_t$ with $Q_t e_t$.
Finally, let $\beta_t \doteq 1 - \frac{\step \en^2}{1+\step \en^2}$. By the Sherman-Morisson formula, $Q = (I - \frac{\step}{1+\step \en^2} e_t e_t\T)$, thus $e_t\T Q = \beta_t e_t\T$, $\Qed = \beta_t \ed$, and $\Qen = \beta_t^2 \en$, which gives the stated result.

Finally, assume that $|\lambda^+| < |\lambda^-|$. Then we must have $1 - \frac{\step^2 \en^2 \dn^2 -2 \step \ed}{2} < 0$, which means that $\step^2 \en^2 \dn^2 +4 - 4 \step \ed < 0$, making the discriminant of the quadratic equation negative, leading to a contradiction. We thus have $|\lambda^+| > |\lambda^-|$.
\end{proof}

\end{document}